\def\eqref#1{equation~\ref{#1}}
\def\1{\bm{1}}
\DeclareMathAlphabet{\mathsfit}{\encodingdefault}{\sfdefault}{m}{sl}
\SetMathAlphabet{\mathsfit}{bold}{\encodingdefault}{\sfdefault}{bx}{n}
\title{Composition-based Multi-Relational \\ Graph Convolutional Networks}
\author{Shikhar Vashishth\thanks{Equally Contributed}~~\thanks{Work done while at IISc, Bangalore}$\ \ ^{1,2}$ \quad Soumya Sanyal$^{*1}$ \quad Vikram Nitin\footnotemark[2]$\ \ ^{3}$ \quad Partha Talukdar$^{1}$\\
	$^{1}$Indian Institute of Science, $^{2}$Carnegie Mellon University, $^{3}$Columbia University\\
	\small \texttt{svashish@cs.cmu.edu}, \small  \texttt{\{shikhar,soumyasanyal,ppt\}@iisc.ac.in}, \\ \small \texttt{vikram.nitin@columbia.edu}
}
\begin{document}

\newcommand{\refalg}[1]{Algorithm \ref{#1}}
\newcommand{\refeqn}[1]{Equation \ref{#1}}
\newcommand{\reffig}[1]{Figure \ref{#1}}
\newcommand{\reftbl}[1]{Table \ref{#1}}
\newcommand{\refsec}[1]{Section \ref{#1}}

\newcommand{\add}[1]{\textcolor{red}{#1}\typeout{#1}}
\newcommand{\remove}[1]{\sout{#1}\typeout{#1}}

\newcommand{\m}[1]{\mathcal{#1}}
\newcommand{\bmm}[1]{\bm{\mathcal{#1}}}
\newcommand{\real}[1]{\mathbb{R}^{#1}}
\newcommand{\method}{\textsc{CompGCN}}

\newcommand{\bleuone}{BLEU${_1}$\xspace}
\newcommand{\bleu}{BLEU${_4}$\xspace}

\newcommand{\problem}{DD}
\newcommand{\problemfull}{Document Dating}

\newtheorem{theorem}{Theorem}[section]
\newtheorem{claim}[theorem]{Claim}

\newcommand{\reminder}[1]{\textcolor{red}{[[ #1 ]]}\typeout{#1}}
\newcommand{\reminderR}[1]{\textcolor{gray}{[[ #1 ]]}\typeout{#1}}

\newcommand{\tensor}{\mathcal{X}}
\newcommand{\Real}{\mathbb{R}}

\newcommand{\tuples}{\mathbb{T}}

\newcommand\norm[1]{\left\lVert#1\right\rVert}

\newcommand{\note}[1]{\textcolor{blue}{#1}}

\newcommand*{\Scale}[2][4]{\scalebox{#1}{$#2$}}%
\newcommand*{\Resize}[2]{\resizebox{#1}{!}{$#2$}}%

\def\mat#1{\mbox{\bf #1}}

\newcommand{\datafb}{FB15k}
\newcommand{\datawn}{WN18}
\newcommand{\datafbn}{FB15k-237}
\newcommand{\datawnn}{WN18RR}
\newcommand{\datayago}{YAGO3-10}

\theoremstyle{definition}
\newtheorem{definition}{Definition}[section]
 
\theoremstyle{proposition}
\newtheorem{proposition}{Proposition}[section]
\newtheorem*{lemma*}{Lemma}

\maketitle

\begin{abstract}
Graph Convolutional Networks (GCNs) have recently been shown to be quite successful in modeling graph-structured data. However, the primary focus has been on handling simple undirected graphs. Multi-relational graphs are a more general and prevalent form of graphs where each edge has a label and direction associated with it. Most of the existing approaches to handle such graphs suffer from over-parameterization and are restricted to learning representations of nodes only. In this paper, we propose \method{}, a novel Graph Convolutional framework which jointly embeds both nodes and relations in a relational graph. \method{} leverages a variety of entity-relation composition operations from Knowledge Graph Embedding techniques and scales with the number of relations. It also generalizes several of the existing multi-relational GCN methods. We evaluate our proposed method on multiple tasks such as node classification, link prediction, and graph classification, and achieve demonstrably superior results. We make the source code of \method{} available to foster reproducible research.
\end{abstract}

\section{Introduction}
\label{sec:introduction}

Graphs are one of the most expressive data-structures which have been used to model a variety of problems. Traditional neural network architectures like Convolutional Neural Networks \citep{alexnet} and Recurrent Neural Networks \citep{lstm} are constrained to handle only Euclidean data. Recently, Graph Convolutional Networks (GCNs) \citep{Bruna2013,Defferrard2016} have been proposed to address this shortcoming, and have been successfully applied to several domains such as social networks \citep{graphsage}, knowledge graphs \citep{r_gcn}, natural language processing \citep{gcn_srl}, drug discovery \citep{deep_chem}, crystal property prediction \citep{mtcgcnn}, and natural sciences \citep{protein_protein_gcn}.

However, most of the existing research on GCNs \citep{Kipf2016,graphsage,gat} have focused on learning representations of nodes in simple undirected graphs. A more general and pervasive class of graphs are multi-relational graphs\footnote{In this paper, multi-relational graphs refer to graphs with edges that have labels and directions.}. A notable example of such graphs is knowledge graphs. Most of the existing GCN based approaches for handling relational graphs \citep{gcn_srl,r_gcn} suffer from over-parameterization and are limited to learning only node representations.
Hence, such methods are not directly applicable for tasks such as link prediction which require relation embedding vectors.
Initial attempts at learning representations for relations in graphs \citep{dual_primal_gcn,graph2seq} have shown some performance gains on tasks like node classification and neural machine translation.

\begin{figure*}[t]	
	\centering	
	\includegraphics[width=\textwidth]{./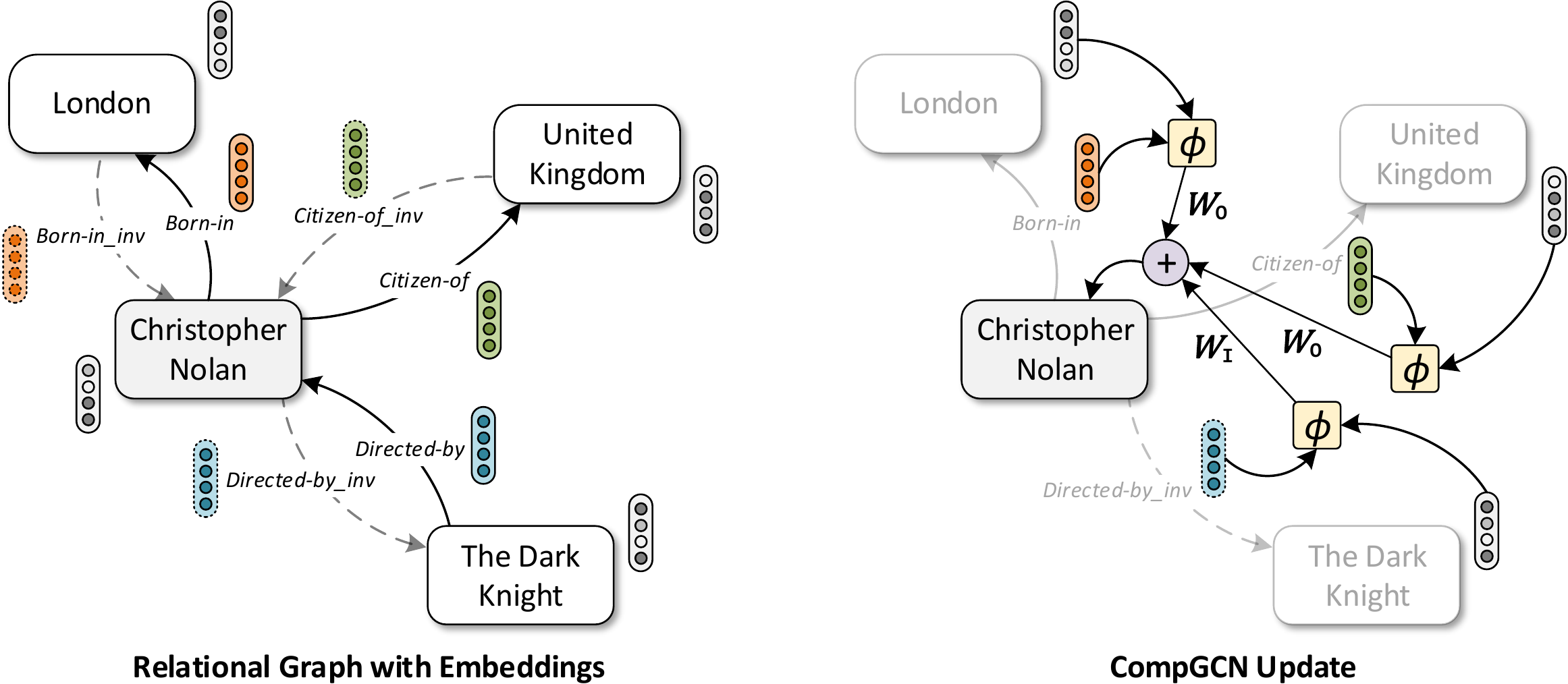}	
	\caption{\label{fig:method_overview} Overview of \method{}. Given node and relation embeddings, \method{} performs a composition operation $\phi(\cdot)$ over each edge in the neighborhood of a central node (e.g. \textit{Christopher Nolan} above). The composed embeddings are then convolved with specific filters $\bm{W}_O$ and $\bm{W}_I$ for original and inverse relations respectively. We omit self-loop in the diagram for clarity. The message from all the neighbors are then aggregated to get an updated embedding of the central node. Also, the relation embeddings are transformed using a separate weight matrix. Please refer to Section \ref{sec:details} for details.
	}	
\end{figure*}

There has been extensive research on embedding Knowledge Graphs (KG) \citep{survey2016nickel,survey2017} where representations of both nodes and relations are jointly learned. These methods are restricted to learning embeddings using link prediction objective. Even though GCNs can learn from task-specific objectives such as classification, their application has been largely restricted to non-relational graph setting. 
Thus, there is a need for a framework which can utilize KG embedding techniques for learning task-specific node and relation embeddings.  In this paper, we propose \method{}, a novel GCN framework for multi-relational graphs which systematically leverages entity-relation composition operations from knowledge graph embedding techniques.
\method{} addresses the shortcomings of previously proposed GCN models by jointly learning vector representations for both nodes and relations in the graph. An overview of \method{} is presented in Figure \ref{fig:method_overview}.
The contributions of our work can be summarized as follows:

\begin{enumerate}[itemsep=2pt,parsep=0pt,partopsep=0pt,leftmargin=*,topsep=0.2pt]
	\item We propose \method{}, a novel framework for incorporating multi-relational information in Graph Convolutional Networks which leverages a variety of composition operations from knowledge graph embedding techniques to jointly embed both nodes and relations in a graph.
	\item We demonstrate that \method{} framework generalizes several existing multi-relational GCN methods (Proposition \ref{prop:reduction}) and also scales with the increase in number of relations in the graph (Section \ref{sec:results_basis}). 
	\item Through extensive experiments on tasks such as node classification, link prediction, and graph classification, we demonstrate the effectiveness of our proposed method.
\end{enumerate} 
\noindent The source code of \method{} and datasets used in the paper have been made available at {\color{blue} \url{http://github.com/malllabiisc/CompGCN}}.
\section{Related Work}
\label{sec:related_work}

\textbf{Graph Convolutional Networks:} GCNs generalize Convolutional Neural Networks (CNNs) to non-Euclidean data. GCNs were first introduced by \cite{Bruna2013} and later made scalable through efficient localized filters in the spectral domain \citep{Defferrard2016}. A first-order approximation of GCNs using Chebyshev polynomials has been proposed by \cite{Kipf2016}. Recently, several of its extensions have also been formulated \citep{graphsage,gat,gin}. Most of the existing GCN methods follow \textit{Message Passing Neural Networks} (MPNN) framework \citep{mpnn} for node aggregation. 
Our proposed method can be seen as an instantiation of the MPNN framework. However, it is specialized for relational graphs.



\textbf{GCNs for Multi-Relational Graph:} An extension of GCNs for relational graphs is proposed by \cite{gcn_srl}. However, they only consider direction-specific filters and ignore relations due to over-parameterization. \cite{r_gcn} address this shortcoming by proposing basis and block-diagonal decomposition of relation specific filters. \textit{Weighted Graph Convolutional Network}  \citep{sacn_paper} utilizes learnable relational specific scalar weights during GCN aggregation. While these methods show performance gains on node classification and link prediction, they are limited to embedding only the nodes of the graph. 
Contemporary to our work, \cite{vrgcn} have also proposed an extension of GCNs for embedding both nodes and relations in multi-relational graphs. However, our proposed method is a more generic framework which can leverage any KG composition operator. We compare against their method in Section \ref{sec:results_link}.


\textbf{Knowledge Graph Embedding:} 
Knowledge graph (KG) embedding is a widely studied field \citep{survey2016nickel, survey2017} with application in tasks like link prediction and question answering \citep{kg_question_answering}. Most of KG embedding approaches define a score function and train node and relation embeddings such that valid triples are assigned a higher score than the invalid ones. Based on the type of score function, KG embedding method are classified as translational \citep{transe,transh}, semantic matching based \citep{distmult,hole} and neural network based \citep{ntn_kg,conve}. In our work, we evaluate the performance of \method{} on link prediction with methods of all three types.

\section{Background}
\label{sec:background}

In this section, we give a brief overview of Graph Convolutional Networks (GCNs) for undirected graphs and its extension to directed relational graphs.

\textbf{GCN on Undirected Graphs:}
\label{sec:back_undirected_graph}
Given a graph $\m{G} = (\m{V}, \m{E}, \bm{\m{X}})$, where $\m{V}$ denotes the set of vertices, $\m{E}$ is the set of edges, and $\bm{\m{X}} \in \mathbb{R}^{|\m{V}| \times d_0} $ represents $d_0$-dimensional input features of each node. The node representation obtained from a single GCN layer is defined as:
$ \bm{H} = f\big(\bm{\hat{A}}\bm{\m{X}} \bm{W}\big)$.
Here, $\bm{\hat{A}}=\widetilde{\bm{D}}^{-\frac{1}{2}} (\bm{A} + \bm{I}) \widetilde{\bm{D}}^{-\frac{1}{2}}$ is the normalized adjacency matrix with added self-connections and $\widetilde{\bm{D}}$ is defined as $\widetilde{\bm{D}}_{ii} = \sum_{j}(\bm{A} + \bm{I})_{ij}$. The model parameter is denoted by $\bm{W} \in \mathbb{R}^{d_0 \times d_1}$ and $f$ is some activation function. The GCN representation $\bm{H}$ encodes the immediate neighborhood of each node in the graph. For capturing multi-hop dependencies in the graph, several GCN layers can be stacked, one on the top of another as follows:
$\bm{H}^{k+1} = f\big(\bm{\hat{A}}\bm{H}^{k} \bm{W}^k\big), $
where $k$ denotes the number of layers, $\bm{W}^{k} \in \mathbb{R}^{d_k \times d_{k+1}}$ is layer-specific parameter and $\bm{H}^0 = \bm{\m{X}}$.

\textbf{GCN on Multi-Relational Graphs:}
\label{sec:back_relational_graph}
For a multi-relational graph $\m{G} = (\m{V}, \m{R}, \m{E}, \bm{\m{X}})$, where $\m{R}$ denotes the set of relations, and each edge $(u, v, r)$ represents that the relation $r \in \m{R}$ exist from node $u$ to $v$. The GCN formulation as devised by \citet{gcn_srl} is based on the assumption that information in a directed edge flows along both directions. Hence, for each edge $(u,v,r) \in \m{E}$, an inverse edge $(v,u,r^{-1})$ is included in $\m{G}$. The representations obtained after $k$ layers of directed GCN is given by
\begin{equation}
\label{eq:dir_gcn}
\bm{H}^{k+1} = f\big(\bm{\hat{A}}\bm{H}^{k} \bm{W}_r^k \big)  .
\end{equation}
Here, $\bm{W}_{r}^{k}$ denotes the relation specific parameters of the model. However, the above formulation leads to over-parameterization with an increase in the number of relations and hence, \citet{gcn_srl} use direction-specific weight matrices. \citet{r_gcn} address over-parameterization by proposing basis and block-diagonal decomposition of $\bm{W}_{r}^{k}$.
%
%
\section{\method{} Details}
\label{sec:details}

In this section, we provide a detailed description of our proposed method, \method{}. The overall architecture is shown in Figure \ref{fig:method_overview}. We represent a multi-relational graph by $\m{G}=(\m{V}, \m{R}, \m{E}, \bm{\m{X}},  \bm{\m{Z}})$ as defined in Section \ref{sec:back_relational_graph} where $\bm{\m{Z}} \in \real{|\m{R}| \times d_0}$ denotes the initial relation features. Our model is motivated by the first-order approximation of GCNs using Chebyshev polynomials \citep{Kipf2016}. Following \citet{gcn_srl}, we also allow the information in a directed edge to flow along both directions. Hence, we extend $\m{E}$ and $\m{R}$ with corresponding inverse edges and relations, i.e., 
\begin{equation*}
\m{E'} = \m{E} \cup \{(v,u,r^{-1})~|~ (u,v,r) \in \m{E}\} \cup \{(u, u, \top)~|~u \in \m{V})\} ,
\end{equation*}
and $\m{R}' = \m{R} \cup \m{R}_{inv} \cup \{\top\}$, where $\m{R}_{inv} =\{ r^{-1} \hspace{2pt} | \hspace{2pt} r \in \m{R} \}$ denotes the inverse relations and $\top$ indicates the self loop.

\begin{table*}[t]
	\centering
	\resizebox{\textwidth}{!}{
		\begin{tabular}{lccccc}
			\toprule
			\textbf{Methods} & Node		  & \multirow{2}{*}{Directions} &   \multirow{2}{*}{Relations}    		  & Relation& Number of \\
			& Embeddings				 &  			     &  & Embeddings &  Parameters\\
			\midrule
			GCN \cite{Kipf2016} 				& \checkmark & & & & $\mathcal{O}(Kd^2)$ \\
			Directed-GCN \cite{gcn_srl} 	& \checkmark & \checkmark & & &  $\mathcal{O}(Kd^2)$ \\
			Weighted-GCN \cite{sacn_paper} 			& \checkmark &  & \checkmark & &  $\mathcal{O}(Kd^2 + K|\m{R}|)$ \\
			Relational-GCN \cite{r_gcn} 					 & \checkmark & \checkmark & \checkmark & &  $\mathcal{O}(\m{B}Kd^2+ \m{B}K|\m{R}|)$ \\
			\midrule
			\method{} (Proposed Method)  & \checkmark & \checkmark & \checkmark & \checkmark & $\mathcal{O}(Kd^2 + \m{B}d + \m{B}|\m{R}|)$ \\
			\bottomrule
		\end{tabular}
	}
	
	\caption{\label{tbl:gcn_model_comp}Comparison of our proposed method, \method{} with other Graph Convolutional methods. Here, $K$ denotes the number of layers in the model, $d$ is the embedding dimension, $\m{B}$ represents the number of bases and $|\m{R}|$ indicates the total number of relations in the graph. Overall, \method{} is most comprehensive and is more parameter efficient than methods which encode relation and direction information.}
\end{table*}

\subsection{Relation-based Composition}
\label{sec:details_relation}
Unlike most of the existing methods which embed only nodes in the graph, \method{} learns a $d$-dimensional representation $\bm{h}_r \in \real{d}, \forall r \in \m{R}$ along with node embeddings $\bm{h}_v \in \real{d}, \forall v \in \m{V}$. Representing relations as vectors alleviates the problem of over-parameterization while applying GCNs on relational graphs. Further, it allows \method{} to exploit any available relation features $(\bmm{Z})$ as initial representations. To incorporate relation embeddings into the GCN formulation, we leverage the entity-relation composition operations used in Knowledge Graph embedding approaches \citep{transe,survey2016nickel}, which are of the form
\[
\bm{e}_o = \phi(\bm{e}_s, \bm{e}_r).
\]
Here, $\phi:\mathbb{R}^{d} \times \mathbb{R}^{d} \to \mathbb{R}^{d}$ is a composition operator, $s$, $r$, and $o$ denote subject, relation and object in the knowledge graph and $\bm{e}_{(\cdot)} \in \real{d}$ denotes their corresponding embeddings. In this paper, we restrict ourselves to non-parameterized operations like subtraction \citep{transe}, multiplication \citep{distmult} and circular-correlation \citep{hole}. However, \method{} can be extended to parameterized operations like Neural Tensor Networks (NTN) \citep{ntn_kg} and ConvE \citep{conve}. We defer their analysis as future work.

As we show in Section \ref{sec:results}, the choice of composition operation is important in deciding the quality of the learned embeddings. Hence, superior composition operations for Knowledge Graphs developed in future can be adopted to improve \method's performance further.

\subsection{\method{} Update Equation}

The GCN update equation (Eq. \ref{eq:dir_gcn}) defined in Section \ref{sec:back_relational_graph} can be re-written as
\[
\bm{h}_{v} = f \Bigg(\sum_{ (u,r) \in \mathcal{N}(v)}  \bm{W}_{r} \bm{h}_{u} \Bigg),
\]
where $\m{N}(v)$ is a set of immediate neighbors of $v$ for its outgoing edges. Since this formulation suffers from over-parameterization, in \method{} we perform composition ($\phi$) of a neighboring node $u$ with respect to its relation $r$ as defined above. This allows our model to be relation aware while being linear ($\mathcal{O}(|\m{R}|d)$) in the number of feature dimensions. Moreover, for treating original, inverse, and self edges differently, we define separate filters for each of them. The update equation of \method{} is given as:
\begin{equation}
\label{eq:node_update}
\bm{h}_{v} = f \Bigg(\sum_{ (u,r) \in \mathcal{N}(v)} \bm{W}_{\lambda(r)} \phi(\bm{x}_{u}, \bm{z}_r) \Bigg),
\end{equation}
where $\bm{x}_u, \bm{z}_r$ denotes initial features for node $u$ and relation $r$ respectively, $\bm{h}_{v}$ denotes the updated representation of node $v$, and $\bm{W}_{\lambda(r)} \in \real{d_1 \times d_0}$ is a relation-type specific parameter.  In \method{}, we use direction specific weights, i.e., $\lambda({r}) = \mathrm{dir}(r)$, given as:
\begin{equation}
\label{eqn:weight_def}
\bm{W}_{\mathrm{dir}(r)} =
\begin{cases} 
\bm{W}_O, & r \in \m{R}\\
\bm{W}_I, & r \in \m{R}_{inv}\\
\bm{W}_S, & r = \top  \ \text{\textit{(self-loop)}}
\end{cases}
\end{equation}

Further, in \method{}, after the node embedding update defined in Eq. \ref{eq:node_update}, the relation embeddings are also transformed as follows:
\begin{equation}
\label{eq:rel_share}
\bm{h}_r = \bm{W}_{\mathrm{rel}} \bm{z}_r   ,
\end{equation}
where $\bm{W}_{\mathrm{rel}} \in \real{d_1 \times d_0}$ is a learnable transformation matrix which projects all the relations to the same embedding space as nodes and allows them to be utilized in the next \method{} layer. In Table \ref{tbl:gcn_model_comp}, we present a contrast between \method{} and other existing methods in terms of their features and parameter complexity. 

\textbf{Scaling with Increasing Number of Relations} To ensure that \method{} scales with the increasing number of relations, we use a variant of the basis formulations proposed in \citet{r_gcn}. Instead of independently defining an embedding for each relation, they are expressed as a linear combination of a set of basis vectors. Formally, let $ \{\bm{v}_1, \bm{v}_2, ..., \bm{v}_{\m{B}} \}$ be a set of learnable basis vectors. Then, initial relation representation is given as:
\[
\bm{z}_r = \sum_{b=1}^{\mathcal{B}} \alpha_{{}_{br}} \bm{v}_b.
\]
Here, $\alpha_{{}_{br}} \in \real{}$ is relation and basis specific learnable scalar weight.  

\textbf{On Comparison with Relational-GCN} Note that this is different from the basis formulation in \cite{r_gcn}, where a separate set of basis matrices is defined for each GCN layer. In contrast, \method{} uses embedding vectors instead of matrices, and defines basis vectors only for the first layer. The later layers share the relations through transformations according to Equation \ref{eq:rel_share}. This makes our model more parameter efficient than Relational-GCN.


We can extend the formulation of Equation \ref{eq:node_update} to the case where we have $k$-stacked \method{} layers. Let $\bm{h}_v^{k+1}$ denote the representation of a node $v$ obtained after $k$ layers which is defined as
\begin{equation}
\label{eqn:main_upd}
\bm{h}_{v}^{k+1} = f \Bigg(\sum_{ (u,r) \in \mathcal{N}(v)} \bm{W}_{\lambda(r)}^k \phi(\bm{h}_{u}^k, \bm{h}_r^k) \Bigg) .
\end{equation}
Similarly, let $\bm{h}_r^{k+1}$ denote the representation of a relation $r$ after $k$ layers. Then,
\[
\bm{h}_r^{k+1} = \bm{W}_{\mathrm{rel}}^k \  \bm{h}_r^k.
\]
Here, $\bm{h}_v^0$ and $\bm{h}_r^0$ are the initial node ($\bm{x}_v$) and relation ($\bm{z}_r$) features respectively. \\ 


\begin{proposition}
\label{prop:reduction}
\method{} generalizes the following Graph Convolutional based methods: \textbf{Kipf-GCN} \citep{Kipf2016}, \textbf{Relational GCN} \citep{r_gcn}, \textbf{Directed GCN} \citep{gcn_srl}, and \textbf{Weighted GCN} \citep{sacn_paper}. 
\end{proposition}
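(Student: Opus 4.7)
The plan is to prove this reduction result by exhibiting, for each of the four baseline methods, a concrete choice of \method{}'s three degrees of freedom—the composition operator $\phi$, the weight-indexing function $\lambda$, and the initial relation embeddings $\bm{z}_r$—under which the update in Equation \ref{eq:node_update} collapses to that method's update rule. Since Equation \ref{eq:node_update} is the single aggregation rule that defines a \method{} layer, a layer-wise reduction suffices, and stacking via Equation \ref{eqn:main_upd} then handles the multi-layer case.

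First I would dispatch the three simpler cases. For \textbf{Kipf-GCN}, I restrict $\m{R}$ to a single dummy relation together with the self-loop, pick $\phi$ to be the Hadamard product with $\bm{z}_r = \bm{1}$ (equivalently, subtraction with $\bm{z}_r = \bm{0}$), and take $\lambda$ constant so $\bm{W}_{\lambda(r)} \equiv \bm{W}$; Equation \ref{eq:node_update} then reduces to $\bm{h}_v = f\!\left(\sum_{u \in \m{N}(v)} \bm{W}\bm{x}_u\right)$, matching the Kipf–Welling aggregation up to the symmetric normalization, which can be absorbed into the neighborhood sum. For \textbf{Directed-GCN}, I retain the same $\phi$ and $\bm{z}_r$ but restore direction-specific weights as in Equation \ref{eqn:weight_def}, recovering the three-matrix (outgoing, incoming, self-loop) scheme of \cite{gcn_srl}. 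For \textbf{Weighted-GCN}, I again take $\phi$ to be Hadamard multiplication but now set $\bm{z}_r = \alpha_r \bm{1}$ for a learnable scalar $\alpha_r$, so that $\bm{W}_{\lambda(r)} \phi(\bm{x}_u,\bm{z}_r) = \alpha_r \bm{W}_{\lambda(r)} \bm{x}_u$, reproducing the scalar-weighted relational aggregation of \cite{sacn_paper}.

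The main obstacle is the \textbf{Relational-GCN} case, because the specific instantiation of \method{} advocated in Equation \ref{eqn:weight_def} uses only direction-specific weight matrices, whereas Relational-GCN allocates one matrix per relation. The resolution is to read $\lambda$ as an arbitrary index function—as the text in Section \ref{sec:details} permits before specializing to $\mathrm{dir}(r)$—and choose $\lambda(r) = r$ together with $\phi(\bm{x},\bm{z}) = \bm{x}$. Equation \ref{eq:node_update} then becomes $\bm{h}_v = f\!\left(\sum_{(u,r)\in\m{N}(v)} \bm{W}_r \bm{x}_u\right)$, which coincides with Equation \ref{eq:dir_gcn}. The basis-decomposition variant of \cite{r_gcn} is captured in the same move by constraining each $\bm{W}_r$ to lie in the span of a shared set of basis matrices. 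In every case I would close the argument by checking that the relation-update step in Equation \ref{eq:rel_share} is vacuous: none of the four baselines maintains relation embeddings, so $\bm{W}_{\mathrm{rel}}$ simply never feeds into the downstream computation, and the reduction persists through arbitrarily many stacked layers.
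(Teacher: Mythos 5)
Your overall strategy is the same as the paper's: the published proof simply specifies, for each baseline, a choice of $\bm{W}_{\lambda(r)}$ and of the composition $\phi$ under which Equation \ref{eqn:main_upd} collapses to that method (Kipf-GCN: $\bm{W}$, $\phi(\bm{h}_u,\bm{h}_r)=\bm{h}_u$; Relational-GCN: $\bm{W}_r$; Directed-GCN: $\bm{W}_{\mathrm{dir}(r)}$; Weighted-GCN: $\bm{W}$ with $\phi=\alpha_r\bm{h}_u$, cf.\ Table \ref{tbl:reduction}), exactly as you do. Your version is somewhat more concrete in that you try to realize these reductions using the advertised non-parameterized operators together with particular initial relation embeddings ($\bm{z}_r=\bm{1}$ or $\alpha_r\bm{1}$ with Mult, $\bm{z}_r=\bm{0}$ with Sub), and you also handle the single-relation restriction and normalization for Kipf-GCN explicitly.

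The one genuine slip is your closing claim that the relation update of Equation \ref{eq:rel_share} is ``vacuous'' because the baselines keep no relation embeddings. In \method{} the transformed relation embedding $\bm{h}_r^{k}$ does feed back into the next layer through $\phi(\bm{h}_u^{k},\bm{h}_r^{k})$ in Equation \ref{eqn:main_upd}, so for your Mult-based instantiations the special form of $\bm{z}_r$ is not preserved under stacking: after one layer $\bm{W}_{\mathrm{rel}}\bm{1}$ is in general no longer $\bm{1}$ (nor is $\alpha_r\bm{W}_{\mathrm{rel}}\bm{1}$ a scalar multiple of $\bm{1}$), so the layer-two update is an elementwise product with an arbitrary vector rather than the identity or a scalar reweighting. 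The fix is easy and worth stating: either use Sub with $\bm{z}_r=\bm{0}$ (which is preserved, since $\bm{W}_{\mathrm{rel}}\bm{0}=\bm{0}$), or constrain $\bm{W}_{\mathrm{rel}}$ to (a multiple of) the identity, or simply take $\phi$ abstractly to be a map that ignores its second argument (respectively multiplies by a per-layer scalar $\alpha_r^{k}$), which is what the paper does and which also recovers Weighted-GCN's layer-specific scalars that your concrete choice cannot express. With that repair your argument matches the paper's.
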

\begin{proof}
	For Kipf-GCN, this can be trivially obtained by making weights $(\bm{W}_{\lambda(r)})$ and composition function $(\phi)$ relation agnostic in Equation \ref{eqn:main_upd}, i.e., $\bm{W}_{\lambda(r)} = \bm{W}$ and $\phi(\bm{h}_{u}, \bm{h}_r) = \bm{h}_u$. Similar reductions can be obtained for other methods as shown in Table \ref{tbl:reduction}.
\end{proof}

\begin{table}[t]
	\centering
	\small
	\begin{tabular}{llc}
		\toprule
		\multicolumn{1}{c}{\textbf{Methods}} & $\bm{W}^k_{\lambda(r)}$ & $\phi(\bm{h}^k_{u}, \bm{h}^k_r)$ \\ 
		\midrule
		Kipf-GCN \citep{Kipf2016} & $\bm{W}^k$ & $\bm{h}^k_u$ \vspace{0.5mm}\\ 
		Relational-GCN  \citep{r_gcn} & $\bm{W}^k_r$ & $\bm{h}^k_u$ \vspace{0.5mm}\\ 
		Directed-GCN \citep{gcn_srl} & $\bm{W}^k_{\mathrm{dir}(r)}$ & $\bm{h}^k_u$ \vspace{0.5mm}\\ 
		Weighted-GCN  \citep{sacn_paper} & $\bm{W}^k$ & $\alpha^k_{r} \bm{h}^k_u$\\ 
		\bottomrule
	\end{tabular}
	
	\caption{\label{tbl:reduction}Reduction of \method{} to several existing Graph Convolutional methods. Here, $\alpha^k_r$ is a relation specific scalar, $\bm{W}^k_r$ denotes a separate weight for each relation, and $\bm{W}^k_{\mathrm{dir}(r)}$ is as defined in Equation \ref{eqn:weight_def}. Please refer to Proposition \ref{prop:reduction} for more details.}
\end{table}

\section{Experimental Setup}


%


\subsection{Evaluation tasks}
\label{sec:exp_tasks}
In our experiments, we evaluate \method{} on the below-mentioned tasks.

\begin{itemize}[itemsep=2pt,parsep=0pt,partopsep=0pt,leftmargin=*,topsep=2pt]

\item  \textbf{Link Prediction} is the task of inferring missing facts based on the known facts in Knowledge Graphs. In our experiments, we utilize FB15k-237 \citep{toutanova} and WN18RR \citep{conve} datasets for evaluation. Following \cite{transe}, we use filtered setting for evaluation and report Mean Reciprocal Rank (MRR), Mean Rank (MR) and Hits@N.
	
\item  \textbf{Node Classification} is the task of predicting the labels of nodes in a graph based on node features and their connections. Similar to \cite{r_gcn}, we evaluate \method{} on MUTAG (Node) and AM \citep{rdf2vec} datasets.  

\item \textbf{Graph Classification}, where, given a set of graphs and their corresponding labels, the goal is to learn a representation for each graph which is fed to a classifier for prediction. We evaluate on 2 bioinformatics dataset: MUTAG (Graph) and PTC \citep{graph_datasets}.  

\end{itemize} 
A summary statistics of the datasets used is provided in Appendix \ref{sec:dataset_stats}
%

\begin{table*}[t]
	\centering
	\begin{small}
		\resizebox{\textwidth}{!}{
			\begin{tabular}{lccccccccccc}
				\toprule
				& \multicolumn{5}{c}{\textbf{FB15k-237}} && \multicolumn{5}{c}{\textbf{WN18RR}}\\ 
				\cmidrule(r){2-6}  \cmidrule(r){8-12} 
				& MRR & MR &H@10 &  H@3 & H@1 && MRR & MR & H@10 & H@3 & H@1 \\
				\midrule
				TransE \citep{transe}  & .294 & 357 & .465 & - & - && .226 & 3384 & .501 & - & - \\
				DistMult \citep{distmult}	& .241 & 254 & .419 & .263 & .155 && .43 & 5110 & .49  & .44 & .39 \\
				ComplEx	\citep{complex}		& .247 & 339 & .428 & .275 & .158 && .44  & 5261 & .51  & .46 & .41 \\
				R-GCN \citep{r_gcn}		& .248 & -   & .417 & & .151 && -    & -    & -    & & -  \\
				KBGAN \citep{kbgan}		& .278 & -   & .458 & & -    && .214 & -    & .472 & - & -\\
				ConvE \citep{conve}		& .325 & 244 & .501 & .356 & .237 &&  .43 & 4187 & .52  & .44 & .40  \\
				ConvKB \citep{convkb} & .243 & 311 & .421 & .371 & .155 && .249 & \textbf{3324} & .524 & .417 & .057 \\
				SACN \citep{sacn_paper} 		& .35  & -   & \textbf{.54} & \textbf{.39 }& .26  && .47  & -    & .54 & .48 & .43 \\
				HypER \citep{hyper} 	& .341 & 250 & .520 & .376 & .252 && .465 & 5798 & .522 & .477 & .436 \\
				RotatE \citep{rotate}		& .338 & \textbf{177} & .533 & .375 & .241 && .476 & 3340 & \textbf{.571} & .492 & .428 \\
				ConvR \citep{convr} 			& .350 & - & .528 & .385 & .261 && .475 & - & .537 & .489 & \textbf{.443} \\
				VR-GCN \citep{vrgcn} 	& .248 & - & .432 & .272 & .159 && - & - & - & - & - \\

				\midrule 
				\method{} (Proposed Method)		& \textbf{.355} & 197 & \textbf{.535} & \textbf{.390} & \textbf{.264} &&  \textbf{.479} & 3533 & .546  & \textbf{.494} & \textbf{.443}  \\
				\bottomrule
				\addlinespace
			\end{tabular}
		}
		\caption{\label{tbl:link_pred} \small Link prediction performance of \method{} and several recent models on FB15k-237 and WN18RR datasets. The results of all the baseline methods are taken directly from the previous papers ('-' indicates missing values). We find that \method{} outperforms all the existing methods on $4$ out of $5$ metrics on FB15k-237 and $3$ out of $5$ metrics on WN18RR. Please refer to Section \ref{sec:results_link} for more details. } \vspace{-3mm}
	\end{small}
\end{table*}

\subsection{Baselines}
\label{sec:exp_baselines}

Across all tasks, we compare against the following GCN methods for relational graphs: (1) Relational-GCN (\textbf{R-GCN}) \citep{r_gcn} which uses relation-specific weight matrices that are defined as a linear combinations of a set of basis matrices. (2) Directed-GCN (\textbf{D-GCN}) \citep{gcn_srl} has separate weight matrices for incoming edges, outgoing edges, and self-loops. It also has relation-specific biases. (3) Weighted-GCN (\textbf{W-GCN}) \citep{sacn_paper} assigns a learnable scalar weight to each relation and multiplies an incoming "message" by this weight. Apart from this, we also compare with several task-specific baselines mentioned below.

\textbf{Link prediction:} For evaluating \method{}, we compare against several non-neural and neural baselines: TransE \cite{transe}, DistMult \citep{distmult}, ComplEx \citep{complex}, R-GCN \citep{r_gcn}, KBGAN \citep{kbgan}, ConvE \citep{conve}, ConvKB \citep{convkb}, SACN \citep{sacn_paper}, HypER \citep{hyper}, RotatE \citep{rotate}, ConvR \citep{convr}, and VR-GCN \citep{vrgcn}.

\textbf{Node and Graph Classification:} For node classification, following \cite{r_gcn}, we compare with Feat \citep{feat}, WL \citep{wl}, and RDF2Vec \citep{rdf2vec}. Finally, for graph classification, we evaluate against \textsc{PachySAN} \citep{pachysan}, Deep Graph CNN (DGCNN)  \citep{dgcnn}, and Graph Isomorphism Network (GIN) \citep{gin}.

\section{Results}
\label{sec:results}


%

In this section, we attempt to answer the following questions.
\begin{itemize}[itemsep=1pt,topsep=2pt,parsep=0pt,partopsep=0pt]
	\item[Q1.] How does \method{} perform on link prediction compared to existing methods? (\ref{sec:results_link})
	\item[Q2.] What is the effect of using different GCN encoders and choice of the compositional operator in \method{} on link prediction performance? (\ref{sec:results_link})
	\item[Q3.] Does \method{} scale with the number of relations in the graph? (\ref{sec:results_basis})
	\item[Q4.] How does \method{} perform on node and graph classification tasks? (\ref{sec:res_node_nmt})
\end{itemize}

\vspace{-1mm}
\subsection{Performance Comparison on Link Prediction}
\label{sec:results_link}
\vspace{-1mm}
In this section, we evaluate the performance of \method{} and the baseline methods listed in Section \ref{sec:exp_baselines} on link prediction task. The results on \datafbn{} and \datawnn{} datasets are presented in Table \ref{tbl:link_pred}. The scores of baseline methods are taken directly from the previous papers \citep{rotate,kbgan,sacn_paper,hyper,convr,vrgcn}. However, for ConvKB, we generate the results using the corrected evaluation code\footnote{https://github.com/KnowledgeBaseCompleter/eval-ConvKB}. Overall, we find that \method{} outperforms all the existing methods in $4$ out of $5$ metrics on \datafbn{} and in $3$ out of $5$ metrics on \datawnn{} dataset. We note that the best performing baseline RotatE uses rotation operation in complex domain. The same operation can be utilized in a complex variant of our proposed method to improve its performance further. We defer this as future work.


\begin{table*}[!t]
	\centering
	\resizebox{\columnwidth}{!}{
		
		\begin{tabular}{m{13em}ccccccccccc}
			\toprule
			
			\multirow{1}{*}{\textbf{Scoring Function (=X)}} $\bm{\rightarrow}$ & \multicolumn{3}{c}{\textbf{TransE}} && \multicolumn{3}{c}{\textbf{DistMult}} && \multicolumn{3}{c}{\textbf{ConvE}} \\ 
			\cmidrule(r){2-4}  \cmidrule(r){6-8} \cmidrule(r){10-12}
			\textbf{Methods} $\bm{\downarrow}$ & MRR & MR & H@10 && MRR & MR & H@10 && MRR & MR & H@10 \\
			\midrule
			X	& 0.294	& 357	& 0.465	&& 0.241	& 354	& 0.419	&& 0.325	& 244	& 0.501 \\
			X + D-GCN & 0.299	& 351 & 0.469	&& 0.321	& 225 & 0.497	&&  0.344 & 200 & 0.524\\	
			X + R-GCN & 0.281	& 325	& 0.443	&& 0.324	& 230	& 0.499	&& 0.342	& 197	& 0.524 \\
			X + W-GCN & 0.267	& 1520	& 0.444	&& 0.324	& 229	& 0.504	&& 0.344	& 201	& 0.525 \\
			\midrule
			X + \method{} (Sub)	& 0.335	& \textbf{194}	& 0.514	&& 0.336	& 231	& 0.513	&& 0.352	& 199	& 0.530 \\
			X + \method{} (Mult)	& \textbf{0.337}	& 233	& 0.515	&& \textbf{0.338	}& \textbf{200}	& \textbf{0.518	}&& 0.353	& 216	& 0.532 \\
			X + \method{} (Corr)	& 0.336	& 214	& \textbf{0.518}	&& 0.335	& 227	& 0.514	&& \boxed{\textbf{0.355}} & 197	& \boxed{\textbf{0.535}} \\
			\midrule
			X + \method{} ($\m{B} = 50$) & 0.330 & 203 & 0.502 && 0.333 & 210 & 0.512 && 0.350 & \boxed{\textbf{193}} & 0.530 \\
			\bottomrule
		\end{tabular}
	}
	\caption{\label{tbl:link_pred_results} \small Performance on link prediction task evaluated on FB15k-237 dataset. X + M (Y) denotes that method M is used for obtaining entity (and relation) embeddings with X as the scoring function. In the case of \method{}, Y denotes the composition operator used. $\m{B}$ indicates the number of relational basis vectors used. Overall, we find that \method{} outperforms all the existing methods across different scoring functions. ConvE + \method{} (Corr) gives the best performance across all settings (highlighted using \boxed{\cdot}). Please refer to Section \ref{sec:results_link} for more details.}   \vspace{-3mm}
	
\end{table*}

\begin{figure*}[t]
	\centering
	\begin{minipage}{.485\textwidth}
		\centering
		\includegraphics[width=1.\linewidth]{./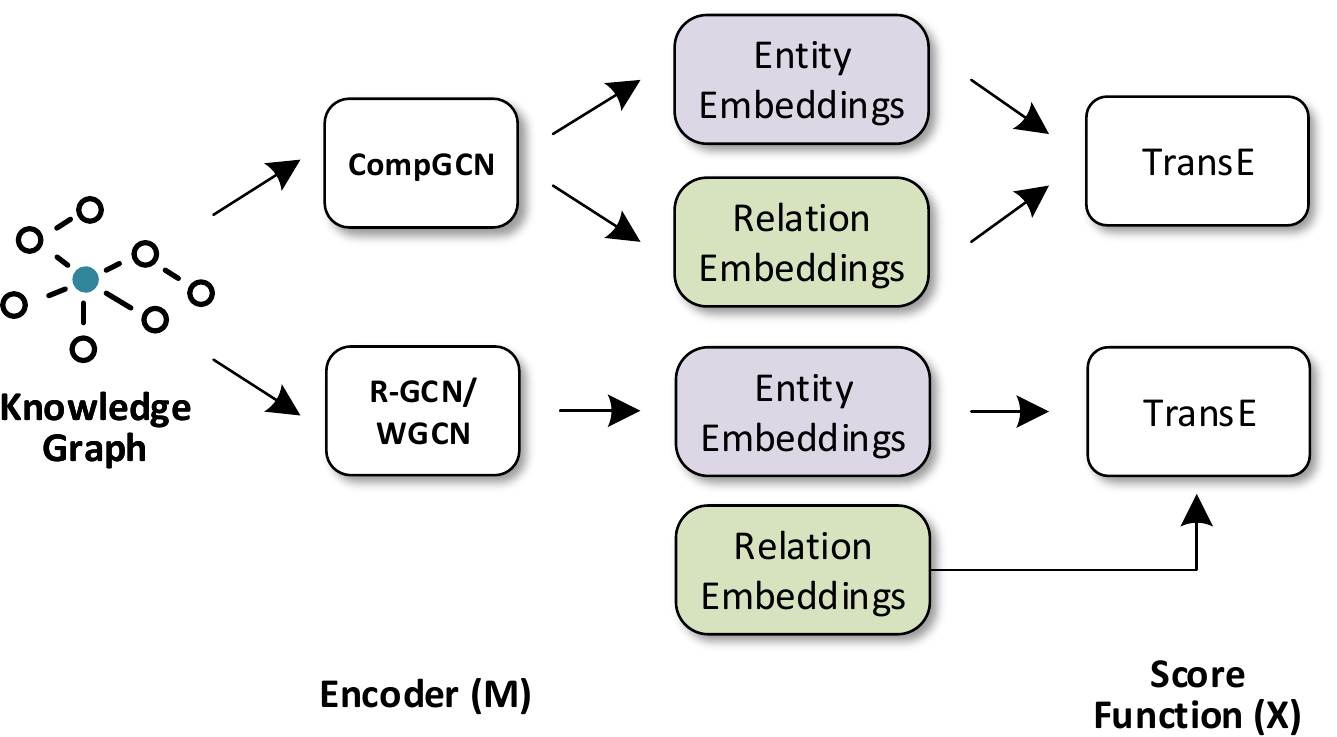}
		\caption{\label{fig:link_pred} \small Knowledge Graph link prediction with \method{} and other methods. \method{} generates both entity and relation embedding as opposed to just entity embeddings for other models. For more details, please refer to Section \ref{sec:results_gcn_encoder}}
	\end{minipage} \quad
	\begin{minipage}{0.47\textwidth}
		\centering
		\includegraphics[width=.99\linewidth]{./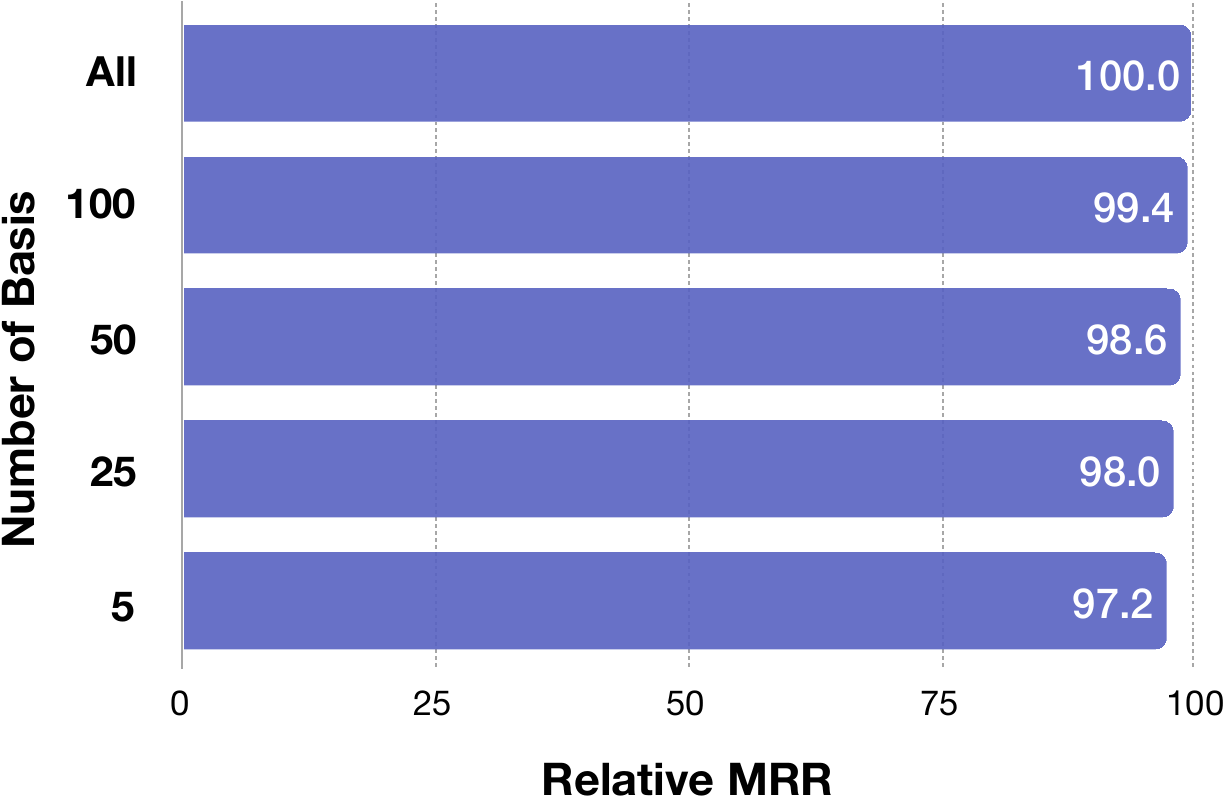}
		\caption{\label{fig:basis_plot} \small Performance of \method{} with different number of relation basis vectors on link prediction task. We report the relative change in MRR on FB15k-237 dataset. Overall, \method{} gives comparable performance even with limited parameters. Refer to Section \ref{sec:results_basis} for details.}
	\end{minipage} \vspace{-4mm}
\end{figure*}

\vspace{-1mm}
\subsection{Comparison of Different GCN Encoders on Link Prediction Performance}
\label{sec:results_gcn_encoder}
\vspace{-1mm}

Next, we evaluate the effect of using different GCN methods as an encoder along with a representative score function (shown in Figure \ref{fig:link_pred}) from each category: TransE (translational), DistMult (semantic-based), and ConvE (neural network-based).  In our results, \textbf{X + M (Y)}  denotes that method \textbf{M} is used for obtaining entity embeddings (and relation embeddings in the case of \method{}) with \textbf{X} as the score function as depicted in Figure \ref{fig:link_pred}. \textbf{Y} denotes the composition operator in the case of \method{}. We evaluate \method{} on three non-parametric composition operators inspired from TransE \citep{transe}, DistMult \citep{distmult}, and HolE \citep{hole} defined as
\begin{itemize}[itemsep=1pt,topsep=1pt,parsep=0pt,partopsep=0pt,leftmargin=5.5mm]
	\item \textbf{Subtraction (Sub):} $\phi(\bm{e}_s, \bm{e}_r) = \bm{e}_s - \bm{e}_r.$
	\item \textbf{Multiplication (Mult):} $\phi(\bm{e}_s, \bm{e}_r) = \bm{e}_s * \bm{e}_r.$
	\item \textbf{Circular-correlation (Corr):} $\phi(\bm{e}_s, \bm{e}_r) \text{=} \bm{e}_s \star \bm{e}_r$
\end{itemize}

The overall results are summarized in Table \ref{tbl:link_pred_results}. Similar to \citet{r_gcn}, we find that utilizing Graph Convolutional based method as encoder gives a substantial improvement in performance for most types of score functions. We observe that although all the baseline GCN methods lead to some degradation with TransE score function, no such behavior is observed for \method{}. On average, \method{} obtains around $6$\%, $4$\% and $3$\% relative increase in MRR with TransE, DistMult, and ConvE objective respectively compared to the best performing baseline. The superior performance of \method{} can be attributed to the fact that it learns both entity and relation embeddings jointly thus providing more expressive power in learned representations. Overall, we find that \method{} with ConvE (highlighted using \boxed{\cdot}) is the best performing method for link prediction.\footnote{We further analyze the best performing method for different relation categories in Appendix \ref{sec:results_rel_cat}.}

\noindent \textbf{Effect of composition Operator:} 
The results on link prediction with different composition operators are presented in Table \ref{tbl:link_pred_results}. 
We find that with DistMult score function, multiplication operator (Mult) gives the best performance while with ConvE, circular-correlation surpasses all other operators. Overall, we observe that more complex operators like circular-correlation outperform or perform comparably to simpler operators such as subtraction. 

\label{sec:scale}
\begin{figure*}[t]
	\centering
	\begin{minipage}{.485\textwidth}
		\centering
		\includegraphics[width=1.\linewidth]{./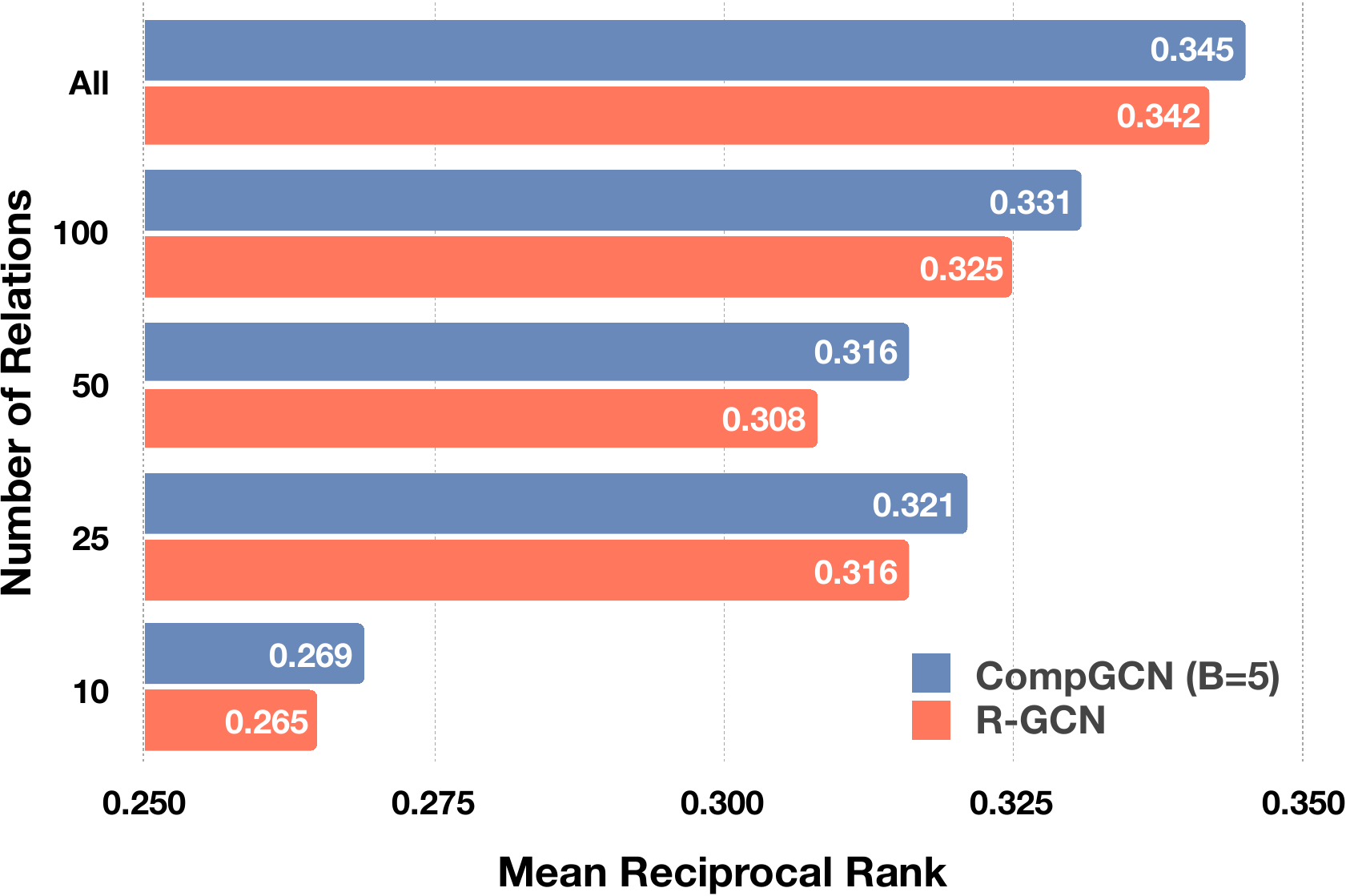}
		\caption{\label{fig:rgcn_compare} \small Comparison of \method{} ($\m{B} = 5$) with R-GCN for pruned versions of Fb15k-237 dataset containing different number of relations. \method{} with 5 relation basis vectors outperforms R-GCN across all setups. For more details, please refer to Section \ref{sec:scale}}
	\end{minipage} \quad
	\begin{minipage}{0.47\textwidth}
		\centering
		\includegraphics[width=.99\linewidth]{./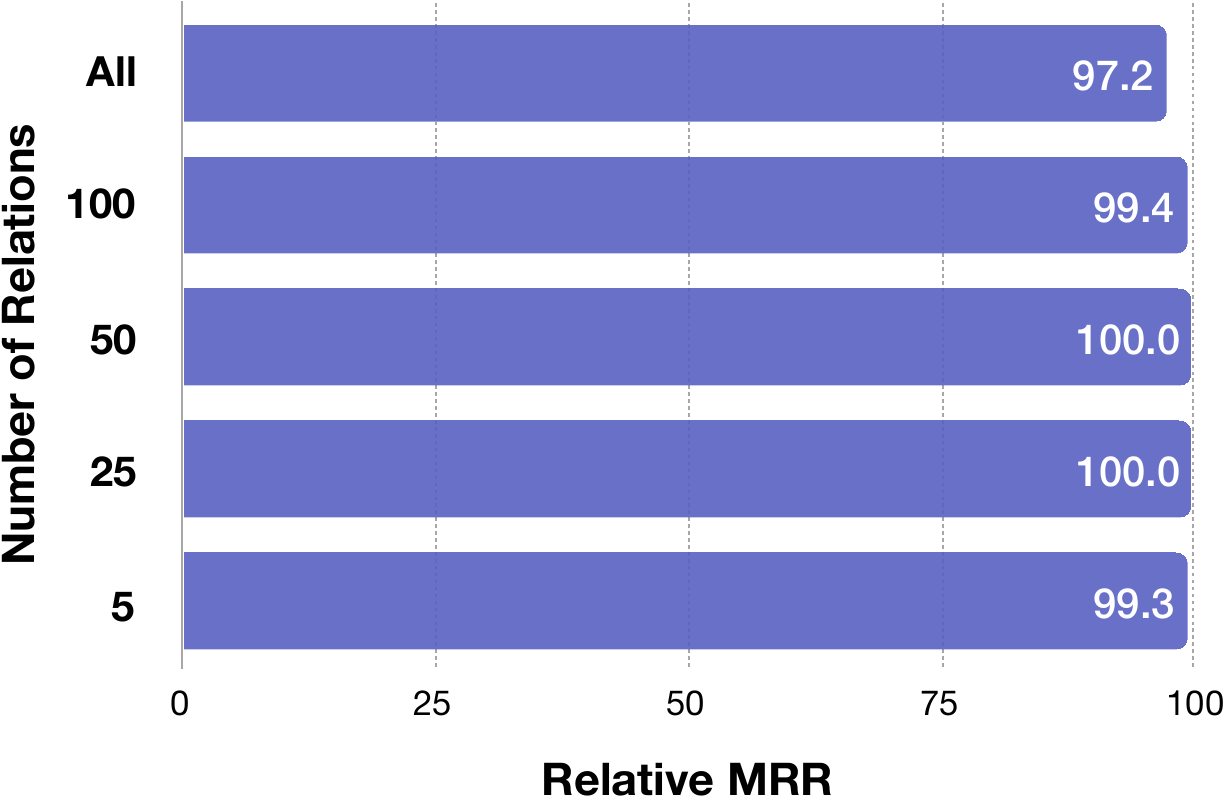}
		\caption{\label{fig:scale_plot} \small Performance of \method{} with different number of relations on link prediction task. We report the relative change in MRR on pruned versions of FB15k-237 dataset. Overall, \method{} gives comparable performance even with limited parameters. Refer to Section \ref{sec:scale} for details.}
	\end{minipage} \vspace{-4mm}
\end{figure*}

\vspace{-1mm}
\subsection{Scalability of \method{}} 
\label{sec:results_basis}
\vspace{-1mm}
In this section, we analyze the scalability of \method{} with varying numbers of relations and basis vectors. For analysis with changing number of relations, we create multiple subsets of FB15k-237 dataset by retaining triples corresponding to top-$m$ most frequent relations, where $m=\{10,25,50,100,237\}$. For all the experiments, we use our best performing model (ConvE + \method{} (Corr)).

\textbf{Effect of Varying Relation Basis Vectors:} Here, we analyze the performance of \method{} on changing the number of relation basis vectors ($\m{B}$) as defined in Section \ref{sec:details}. 
The results are summarized in Figure \ref{fig:basis_plot}. We find that our model performance improves with the increasing number of basis vectors. We note that with $\m{B}=100$, the performance of the model becomes comparable to the case where all relations have their individual embeddings. 
In Table \ref{tbl:link_pred_results}, we report the results for the best performing model across all score function with $\m{B}$ set to $50$. We note that the parameter-efficient variant also gives a comparable performance and outperforms the baselines in all settings. 

\textbf{Effect of Number of Relations:} Next, we report the relative performance of \method{} using $5$ relation basis vectors ($\m{B}=5$) against \method{}, which utilizes a separate vector for each relation in the dataset. The results are presented in Figure \ref{fig:scale_plot}. Overall, we find that across all different numbers of relations, \method{}, with a limited basis, gives comparable performance to the full model. The results show that a parameter-efficient variant of \method{} scales with the increasing number of relations.


\textbf{Comparison with R-GCN:} Here, we perform a comparison of a parameter-efficient variant of \method{} ($\m{B}=5$) against R-GCN on different number of relations. The results are depicted in Figure \ref{fig:rgcn_compare}. We observe that \method{} with limited parameters consistently outperforms R-GCN across all settings. Thus, \method{} is parameter-efficient and more effective at encoding multi-relational graphs than R-GCN.

\vspace{-1mm}
\subsection{Evaluation on Node and Graph Classification}
\label{sec:res_node_nmt}
\vspace{-1mm}
In this section, we evaluate \method{} on node and graph classification tasks on datasets as described in Section \ref{sec:exp_tasks}. The experimental results are presented in Table \ref{tbl:nodeclass_results}. For node classification task, we report accuracy on test split provided by \cite{node_class_splits}, whereas for graph classification, following \cite{graph_datasets} and \cite{gin}, we report the average and standard deviation of validation accuracies across the 10 folds cross-validation.  Overall, we find that \method{} outperforms all the baseline methods on node classification and gives a comparable performance on graph classification task. This demonstrates the effectiveness of incorporating relations using \method{} over the existing GCN based models. On node classification, compared to the best performing baseline, we obtain an average improvement of $3$\% across both datasets while on graph classification, we obtain an improvement of $3$\% on PTC dataset.

\begin{table}[!t]
	\centering
	\small
	\resizebox{\textwidth}{!}{
		\begin{tabular}{lcc}
			\toprule
			& \textbf{MUTAG (Node)} & \textbf{AM} \\
			\midrule
			Feat$^*$		& 77.9 & 66.7 \\ 
			WL$^*$				& 80.9 & 87.4 \\
			RDF2Vec$^*$	  & 67.2 & 88.3 \\
			R-GCN$^*$		& 73.2	&89.3 \\
			SynGCN	& 74.8 $\pm$ 5.5	& 86.2 $\pm$ 1.9	\\
			WGCN	& 77.9 $\pm$ 3.2	& 90.2 $\pm$ 0.9	\\
			\midrule
			\method{}	& \textbf{85.3 $\pm$ 1.2}	& \textbf{90.6 $\pm$ 0.2}	\\
			\bottomrule
			\addlinespace
		\end{tabular}
		\quad
		\begin{tabular}{lcc}
			\toprule
			& \textbf{MUTAG (Graph)} & \textbf{PTC} \\
			\midrule
			\sc{PachySAN}$^\dagger$			& \textbf{92.6 $\pm$ 4.2} & 60.0 $\pm$ 4.8 \\ 
			DGCNN$^\dagger$				& 85.8 & 58.6 \\
			GIN$^\dagger$	  	& 89.4 $\pm$ 4.7 & 64.6 $\pm$ 7.0 \\
			R-GCN & 82.3 $\pm$ 9.2	& 67.8 $\pm$ 13.2 \\
			SynGCN & 79.3 $\pm$ 10.3	& 69.4 $\pm$ 11.5 \\
			WGCN & 78.9 $\pm$ 12.0	& 67.3 $\pm$ 12.0 \\
			\midrule
			\method{}	& 89.0 $\pm$ 11.1	& \textbf{71.6 $\pm$ 12.0}	\\
			\bottomrule
			\addlinespace
		\end{tabular}
	}
	\caption{\label{tbl:nodeclass_results} \small Performance comparison on node classification (\textbf{Left}) and graph classification (\textbf{Right}) tasks.  $*$ and $\dagger$ indicate that results are directly taken from \cite{r_gcn} and \cite{gin} respectively. Overall, we find that \method{} either outperforms or performs comparably compared to the existing methods. Please refer to Section \ref{sec:res_node_nmt} for more details.} \vspace{-4mm}
\end{table}



\vspace{-1mm}
\section{Conclusion}
\label{sec:conclusion}
\vspace{-1mm}

In this paper, we proposed \method{}, a novel Graph Convolutional based framework for multi-relational graphs which leverages a variety of composition operators from Knowledge Graph embedding techniques to jointly embed nodes and relations in a  graph. Our method generalizes several existing multi-relational GCN methods. Moreover, our method alleviates the problem of over-parameterization by sharing relation embeddings across layers and using basis decomposition. Through extensive experiments on knowledge graph link prediction, node classification, and graph classification tasks, we showed the effectiveness of \method{} over existing GCN based methods and demonstrated its scalability with increasing number of relations.

\section*{Acknowledgments}
We thank the anonymous reviewers for their constructive comments. This work is supported in part by the Ministry of Human Resource Development (Government of India) and Google PhD Fellowship.

\bibliography{references}
\bibliographystyle{iclr2020_conference}

\newpage
\appendix
\section{Appendix}

\subsection{Evaluation by Relation Category}
\label{sec:results_rel_cat}
In this section, we investigate the performance of \method{} on link prediction for different relation categories on FB15k-237 dataset. Following \citet{kg_relation_cat,rotate}, based on the average number of tails per head and heads per tail, we divide the relations into four categories: one-to-one, one-to-many, many-to-one and many-to-many. The results are summarized in Table \ref{tbl:results_rel_cat}. We observe that using GCN based encoders for obtaining entity and relation embeddings helps to improve performance on all types of relations. In the case of one-to-one relations, \method{} gives an average improvement of around $10$\% on MRR compared to the best performing baseline (ConvE + W-GCN). For one-to-many, many-to-one, and many-to-many the corresponding improvements are $10.5$\%, $7.5$\%, and $4$\%. These results show that \method{} is effective at handling both simple and complex relations.

\begin{table*}[!h]
	\centering
	\small
	\begin{tabular}{lm{3em}ccccccccccc}
		\toprule
		
		{} &  {} & \multicolumn{3}{c}{\textbf{ConvE}} && \multicolumn{3}{c}{\textbf{ConvE + W-GCN}} && \multicolumn{3}{c}{\textbf{ConvE + \method{} (Corr)}} \\
		\cmidrule(r){3-5}  \cmidrule(r){7-9} \cmidrule(r){11-13}
		{} & &   MRR & MR   & H@10  &&  MRR & MR   & H@10  &&  MRR & MR   & H@10   \\
		\midrule
		\multirow{4}{*}{\rotatebox[origin=c]{90}{Head Pred}} & 1-1 & 0.193	& 459	& 0.385	&& 0.422	& 238	& 0.547	&& \textbf{0.457	}& \textbf{150}	& \textbf{0.604} \\
		& 1-N & 0.068	& 922	& 0.116	&& 0.093	& 612	& 0.187	&& \textbf{0.112}	& \textbf{604}	& \textbf{0.190} \\
		& N-1 & 0.438	& 123	& 0.638	&& 0.454	& 101	& 0.647	&& \textbf{0.471}	& \textbf{99}	& \textbf{0.656} \\
		& N-N & 0.246	& 189	& 0.436	&& 0.261	& \textbf{169}	& 0.459	&& \textbf{0.275} & 179	& \textbf{0.474} \\
		\midrule
		\multirow{4}{*}{\rotatebox[origin=c]{90}{Tail Pred}} & 1-1 & 0.177	& 402	& 0.391	&& 0.406	& 319	& 0.531	&& \textbf{0.453}	& \textbf{193}	& \textbf{0.589} \\
		& 1-N & 0.756	& 66	& 0.867	&& 0.771	& 43	& 0.875	&& \textbf{0.779}	& \textbf{34}	& \textbf{0.885} \\
		& N-1 & 0.049	& 783	& 0.09	&& 0.068	& \textbf{747}	& 0.139	&& \textbf{0.076}	& 792	& \textbf{0.151} \\
		& N-N & 0.369	& 119	& 0.587	&& 0.385	& 107	& 0.607	&& \textbf{0.395	}& \textbf{102}	& \textbf{0.616} \\
		\bottomrule
	\end{tabular}
	\caption{\label{tbl:results_rel_cat}Results on link prediction by relation category on FB15k-237 dataset. Following \cite{kg_relation_cat}, the relations are divided into four categories: one-to-one (1-1), one-to-many (1-N), many-to-one (N-1), and many-to-many (N-N). We find that \method{} helps to improve performance on all types of relations compared to existing methods. Please refer to Section \ref{sec:results_rel_cat} for more details.}
\end{table*}

\subsection{Dataset Details}
\label{sec:dataset_stats}
In this section, we provide the details of the different datasets used in the experiments. For link prediction, we use the following two datasets: 

\begin{itemize}[itemsep=2pt,parsep=0pt,partopsep=0pt,leftmargin=*,topsep=2pt]
	\item \textbf{\datafbn{}} \citep{toutanova} is a pruned version of FB15k \citep{transe} dataset with inverse relations removed to prevent direct inference. 
	\item \textbf{\datawnn{}} \citep{conve}, similar to \datafbn{}, is a subset from WN18 \citep{transe} dataset which is derived from WordNet \citep{wordnet}. 
\end{itemize}

For node classification, similar to \cite{r_gcn}, we evaluate on the following two datasets: 
\begin{itemize}[itemsep=2pt,parsep=0pt,partopsep=0pt,leftmargin=*,topsep=2pt]
	\item \textbf{MUTAG (Node)} is a dataset from DL-Learner toolkit\footnote{http://www.dl-learner.org}. It contains relationship between complex molecules and the task is to identify whether a molecule is carcinogenic or not. 
	\item \textbf{AM} dataset contains relationship between different artifacts in Amsterdam Museum \citep{am_dataset}. The goal is to predict the category of a given artifact based on its links and other attributes. 
\end{itemize}

Finally, for graph classification, similar to \cite{gin}, we evaluate on the following datasets:
\begin{itemize}[itemsep=2pt,parsep=0pt,partopsep=0pt,leftmargin=*,topsep=2pt]
	\item \textbf{MUTAG (Graph)} \cite{mutag_graph} is a bioinformatics dataset of 188 mutagenic aromatic and nitro compounds. The graphs need to be categorized into two classes based on their mutagenic effect on a bacterium. 
	\item \textbf{PTC} \cite{ptc_dataset} is a dataset consisting of 344 chemical compounds which indicate carcinogenicity of male and female rats. The task is to label the graphs based on their carcinogenicity on rodents. 
\end{itemize}

A summary statistics of all the datasets used is presented in Table \ref{table:rgcn_data}.

\begin{table}[t]
	\centering
	\small
	\begin{tabular}{lcccccc}
		\toprule
		&  \multicolumn{2}{c}{\bf Link Prediction} & \multicolumn{2}{c}{\bf Node Classification} & \multicolumn{2}{c}{\bf Graph Classification}\\ 
		\cmidrule(r){2-3} \cmidrule(r){4-5} \cmidrule(r){6-7} 
		& \multicolumn{1}{c}{FB15k-237} & \multicolumn{1}{c}{WN18RR} & \multicolumn{1}{c}{MUTAG (Node)} &  \multicolumn{1}{c}{AM} & \multicolumn{1}{c}{MUTAG (Graph)} & \multicolumn{1}{c}{PTC}\\
		\midrule
		Graphs   & 1 & 1 & 1 & 1 & 188 & 344 \\
		Entities   & 14,541 & 40,943 & 23,644 & 1,666,764 & 17.9 (Avg) & 25.5 (Avg)\\
		Edges 	  & 310,116 & 93,003 & 74,227 & 5,988,321 & 39.6 (Avg) & 29.5 (Avg)\\
		Relations & 237 & 11 & 23 & 133 & 4 & 4 \\
		Classes   & - & - & 2 & 11 & 2 & 2\\
		\bottomrule
	\end{tabular}
	
	\caption{\label{table:rgcn_data}The details of the datasets used for node classification, link prediction, and graph classification tasks. Please refer to Section \ref{sec:exp_tasks} for more details.}
\end{table}

\subsection{Hyperparameters}
\label{sec:hyperparams}
Here, we present the implementation details for each task used for evaluation in the paper. For all the tasks, we used \method{} build on PyTorch geometric framework \citep{pytorch_geometric}. 

\noindent \textbf{Link Prediction:} For evaluation, $200$-dimensional embeddings for node and relation embeddings are used. For selecting the best model we perform a hyperparameter search using the validation data over the values listed in Table \ref{table:hyperparams}. For training link prediction models, we use the standard binary cross entropy loss with label smoothing \cite{conve}. 

\noindent \textbf{Node Classification:} Following \citet{r_gcn}, we use $10$\% training data as validation for selecting the best model for both the datasets. We restrict the number of hidden units to $32$. We use cross-entropy loss for training our model. 

\noindent \textbf{Graph Classification:} Similar to \cite{graph_datasets,gin}, we report the mean and standard deviation of validation accuracies across the 10 folds cross-validation. Cross-entropy loss is used for training the entire model. For obtaining the graph-level representation, we use simple averaging of embedding of all nodes as the readout function, i.e.,
\[
\bm{h}_{\m{G}} = \dfrac{1}{|\m{V}|}\sum_{v \in \m{V}} \bm{h}_v,
\]

where $\bm{h}_v$ is the learned node representation for node $v$ in the graph. 

For all the experiments, training is done using Adam optimizer \citep{adam_opt} and Xavier initialization \citep{xavier_init} is used for initializing parameters.
	
\begin{table}[h]
	\centering
	\small
	\begin{tabular}{ll}
		\toprule
		Hyperparameter                 & Values 					\\
		\midrule
		Number of GCN Layer ($K$) & \{1, 2, 3\}			\\
		Learning rate                  & \{0.001, 0.0001\}			\\
		Batch size                     & \{128, 256\}			\\
		Dropout  & \{0.0, 0.1, 0.2, 0.3\}		\\
		\bottomrule
	\end{tabular}
	\caption{Details of hyperparameters used for link prediction task. Please refer to Section \ref{sec:hyperparams} for more details.}
	\label{table:hyperparams}
\end{table}


\end{document}